\theoremstyle{plain}
\newtheorem{thm}{\protect\theoremname}
\theoremstyle{definition}
\theoremstyle{definition}
\newtheorem{problem}{\protect\problemname}
\theoremstyle{plain}
\newtheorem{lem}{\protect\lemmaname}
\theoremstyle{definition}
\theoremstyle{remark}
\newtheorem{rem}{\protect\remarkname}
\theoremstyle{plain}
\newtheorem{definition}{Definition}
\tikzset{>=latex}
\providecommand{\corollaryname}{Corollary}
\providecommand{\definitionname}{Definition}
\providecommand{\examplename}{Example}
\providecommand{\lemmaname}{Lemma}
\providecommand{\problemname}{Problem}
\providecommand{\remarkname}{Remark}
\providecommand{\theoremname}{Theorem}
      \theoremstyle{plain}
      \newtheorem{assumption}{Assumption}
      \theoremstyle{plain}
\def\BibTeX{{\rm B\kern-.05em{\sc i\kern-.025em b}\kern-.08em
		T\kern-.1667em\lower.7ex\hbox{E}\kern-.125emX}}
\begin{document}

\title{LQR-CBF-RRT*: Safe and Optimal Motion Planning

\thanks{$^{1}$Guang Yang and Amanda Prorok are with Computer Science, University of Cambridge, Cambridge, CB3 0FD, UK.
        {\tt\footnotesize gy268@cam.ac.uk }}

\thanks{$^{2}$Mingyu Cai is with Mechanical Engineering, Lehigh University, Bethlehem, PA, 18015 USA.
        {\tt\footnotesize mic221@lehigh.edu}}

\thanks{$^{3}$Ahmad Ahmad, Roberto Tron and Calin Belta are with Boston University
        {\tt\footnotesize \{ahmadgh,tron,cbelta\}@bu.edu }}

\thanks{The research is funded in part by a gift from Arm \textregistered. Their support is gratefully acknowledged.}

}

\author{ Guang Yang$^{1}$, Mingyu Cai$^{2}$, Ahmad Ahmad$^3$, Amanda Prorok$^{1}$, Roberto Tron$^3$ and Calin Belta$^3$
}

\maketitle

\begin{abstract}
We present LQR-CBF-RRT*, an incremental sampling-based algorithm for offline motion planning. Our framework leverages the strength of Control Barrier Functions (CBFs) and Linear Quadratic Regulators (LQR) to generate safety-critical and optimal trajectories for 
a robot with dynamics described by an affine control system. CBFs are used for safety guarantees, while LQRs are employed for optimal control synthesis during edge extensions. Popular CBF-based formulations for safety critical control require solving Quadratic Programs (QPs), which can be computationally expensive. Moreover, LQR-based controllers require repetitive applications of first-order Taylor approximations for nonlinear systems, which can also create an additional computational burden. To improve the motion planning efficiency, we verify the satisfaction of the CBF constraints directly in edge extension to avoid the burden of solving the QPs. We store computed optimal LQR gain matrices in a hash table to avoid re-computation during the local linearization of the rewiring procedure. Lastly, we utilize the Cross-Entropy Method for importance sampling to improve sampling efficiency. Our results show that the proposed planner surpasses its counterparts in computational efficiency and performs well in an experimental setup. 
\end{abstract}



\section{Introduction}
Robot motion planning involves computing an optimal plan that guides a robot safely and efficiently towards a goal. Sampling-based planners, such as Probabilistic Road Map (PRM) \cite{kavraki1996probabilistic} and Rapidly-exploring Random Trees (RRT) \cite{lavalle1998rapidly}, have been widely used to solve this problem. With particular relevance to this work, there is also a
an asymptotically optimal version of RRT, called RRT*, and its variants \cite{islam2012rrt,webb2013kinodynamic,li2020pq}. Given the popularity of RRT*-based planning algorithms, there has been an extensive effort to reduce the sampling computational complexity by exploiting the problem structure. An example is Informed RRT*, \cite{gammell2014informed}, which outperforms RRT* in terms of convergence rate. 

The above approaches assume a collision checking method exists during trajectory extension and sampling, such that unsafe states can be rejected during sampling to ensure safety. In addition, classic RRT or RRT* variants do not consider dynamics during planning, and therefore do not address the feasibility and control constraints. Several studies, such as those in\cite{bialkowski2013efficient} and \cite{pan2016fast}, have enhanced the collision checking procedure in sampling-based motion planners. However, these improvements can still be computationally expensive and require a more generalized solution for various nonlinear systems. 
Recent work~\cite{cai2023overcoming} employs geometric RRT* as guidance of deep reinforcement learning to improve the exploration efficiency. Similarly, the works in \cite{kobilarov2012cross,ahmad2022adaptive} employ an importance sampling (IS) algorithm, namely the cross-entropy method (CEM), for efficient exploration and sampling for RRT*. 
Our work aims to create a complete sampling-based motion planning framework that can efficiently generate samples from an optimal distribution, while ensuring optimality and safety. 

\noindent \textbf{Control Barrier Function (CBF)} There has been extensive research on safety-critical control and motion planning using CBFs. The significant advantage of CBFs comes from their guarantees on \textit{forward invariance} \cite{xu2015robustness}, i.e., if the system trajectory is initialized in a safe set, it will never leave the set if there exists a corresponding CBF. The standard formulation of a CBF-based controller involves Quadratic Programs (QPs) and applies the generated control inputs using zero-order hold (ZOH) controllers \cite{ames2016control}, whereby each QP is constrained by a Control Lyapunov Function (CLF) for stability and CBFs for safety (see \cite{ames2014control,wang2017safe,cheng2020safe}). For motion planning, the first CBF-based sampling-based motion planning algorithm was introduced in \cite{yang2019sampling}. In contrast to \cite{weiss2017motion,shkolnik2009reachability}, the \textit{forward invariance} property from CBFs removes the requirement of explicit collision checking. The approach can generalize to any control affine nonlinear system, and the safety sets require no assumption on linearity or convexity. Later works, such as \cite{manjunath2021safe, ahmad2022adaptive}, were built on top of it to improve its computational and sampling efficiency. Inspired by RRT* \cite{karaman2011sampling} and \cite{yang2019sampling}, the work in \cite{ahmad2022adaptive}, Adaptive CBF-RRT*, implemented a rewiring procedure to improve the cost of the trajectory. However, the computation is costly due to the iterative calculation of the QP.

\noindent \textbf{Linear Quadratic Regulator(LQR)} An LQR \cite{sun2016stochastic}, \cite{tedrake2010lqr} is  a widely used optimal control strategy that minimizes a quadratic cost function over a system's states and control inputs. The algorithm computes an optimal state-feedback gain based on the system dynamics, which can be solved by an algebraic Riccati equation. It has proven effective for robotic systems with linear dynamics \cite{argentim2013pid, okyere2019lqr, xiao2022learning,chen2019autonomous}. Other variations, such as iterative LQR (iLQR) \cite{van2014iterated, sun2016stochastic} can handle nonlinear system dynamics \cite{perez2012lqr,doerr2020motion,glassman2010quadratic}. In these methods, the cost function is approximated using a second-order Taylor expansion. While effective for nonlinear systems, this method can be costly. Moreover, collision checking is required during edge extension and rewiring.

In prior works \cite{yang2019sampling, ahmad2022adaptive}, CLFs and CBFs are incorporated into the steering function to ensure stability and safety. The QP is solved iteratively to generate controls that steer the system trajectory toward the sampled new state. The CBF constraints guarantee that the resulting trajectory avoids collisions with obstacles. However, this approach has a few downsides: (\textit{i}) Solving a sequence of QPs in the steering function can be costly, especially when the step size is selected to be small \cite{yang2019sampling}; (\textit{ii}) The QPs can easily become infeasible \cite{zeng2021safety}; and  (\textit{iii}) The QP controllers only ensure sub-optimal controls as the formulation of the objective function only guarantees optimality point-wise in time \cite{ahmad2022adaptive}.

\noindent \textbf{Contribution} We propose an efficient, optimal, and safe sampling-based offline motion planner that accounts for system dynamics. The generated state trajectory can then be tracked by feedback controllers online. This work specifically focuses on offline optimal planning using RRT*-like approaches \cite{perez2012lqr,islam2012rrt,li2020pq}. Therefore, dynamic obstacles and online planning are not within the scope of this paper. Our proposed approach is shown to significantly improve the efficiency from several aspects. In summary,
\begin{compactitem}
\item Our method generates optimal control during offline planning that minimizes the LQR cost \eqref{eq:LQRcost}. We show superior efficiency via baseline comparisons. For nonlinear systems, we reduce the frequency of computing LQR gains for locally linearized models by storing the previously calculated feedback gains using a hash table, avoiding repetitive LQR calculations during the rewiring procedure.

\vspace{0.8mm}

\item To reduce the computational cost and mitigate the infeasibility in the traditional CBF and CLF-based QP formulation in ~\cite{yang2019sampling, ahmad2022adaptive}, our framework does not require formulating and solving a CBF-based QP. Moreover, our approach guarantees optimality thanks to the LQR control formulation. 
\vspace{0.8mm}

\item  We used a customized omnidirectional robot to track the generated optimal trajectory for validating our method, and we showed that the robot successfully completed its navigation task in a cluttered environment.
\end{compactitem}

\section{Preliminaries}
\textbf{Notation:} A function $f: \mathbb{R}^n \mapsto \mathbb{R}^m$ is called \emph{Lipschitz continuous} on $\mathbb{R}^n$ if $ \exists \mathcal{L}\in \mathbb{R}^{+}$, such that $\|f(y)-f(x)\| \leq \mathcal{L} \|y-x\|, \forall x,y \in \mathbb{R}^n$. For a continuously differentiable function $h:\mathbb{R}^n \mapsto \mathbb{R}$, we use $\dot{h}$ to denote the derivative with respect to time $t$. The $\pounds^{r}_{f} h(x):=\frac{\partial \pounds^{r} h (x)}{\partial x} f(x), \pounds^{r}_{g} \pounds^{r-1}_{f}h(x):=\frac{\partial \pounds_f^{r-1} h(x)}{\partial x} g(x)$ are the $r$-th-order Lie derivatives \cite{khalil2002nonlinear}. We define continuous function $\alpha:[-b,a) \mapsto [-\infty,\infty)$, for some $a>0, b>0$, as a class kappa function, denoted as $\mathcal{K}$.  The $\alpha$ is strictly increasing, and $\alpha(0)=0$. Lastly, we denote $B_r (x)$ as a ball of region with radius r centered at $x \in \mathbb{R}^n$.

\subsection{System Dynamics}
Consider a continuous time dynamical system
\begin{equation}\label{eq:dynamicSystem}
\dot{x} = f(x) + g(x)u,
\end{equation} with state space $\mathcal{X}$ and control space $\mathcal{U}$. We define $x \in \mathcal{X} \subset \mathbb{R}^n$, $u \in \mathcal{U} \subset \mathbb{R}^m$, and $f(x):\mathbb{R}^n \mapsto \mathbb{R}^n$, $g(x):\mathbb{R}^n \mapsto \mathbb{R}^{n \times m}$ are locally Lipschitz continuous. We define the obstacles as the union of regions in $\mathcal{X}$ in which the robot configurations state variables coincide with the obstacles, $\mathcal{X}_{\mathrm{i, obs}} \subset \mathcal{X}, i=1,\dots,\mathrm{N_{obs}}$ and the obstacle-free set $\mathcal{X}_{\mathrm{safe}} := \mathcal{X} \setminus \cup_{i=1}^{\mathrm{N_{obs}}}\mathcal{X}_{\mathrm{i, obs}}$.
\vspace{-3.0 pt}
\subsection{Linear Quadratic Regulator}
We use LQR to compute optimal control policies. The cost function with the infinite time horizon is defined as
\begin{equation}\label{eq:LQRcost}
J = \int_{0}^\infty x^T Q x + u^T R u \:  dt,
\end{equation}
where $Q = Q^T \succeq 0$ and $R = R^T \succ 0$ are weight matrices for state $x$ and control $u$, respectively. For linear system dynamics, we can compute the closed form solution for the optimal control for a given linear-time invariant systems. Given a linear system, 
\begin{equation}\label{eq:linearSystem}
    \dot{x} = Ax + Bu,
\end{equation}
and the cost function \eqref{eq:LQRcost}, we can compute the LQR gain matrix $K_{LQR}$ by solving the algebraic Riccati equation with cost matrix $P$ as
\begin{equation}\label{eq:Reccati}
    A^T P + PA -PBR^{-1}B^TP+Q=0,
\end{equation}
such that optimal control policy is denoted as
\begin{equation}\label{eq:LQRPolicy}
    \pi^*(x) = - K_{\mathrm{LQR}}x, 
\end{equation}
where $K_{\mathrm{LQR}} = R^{-1}B^TP$ \cite{bertsekas2012dynamic} and $P$ is a stabilizing solution for \ref{eq:Reccati}. For nonlinear systems, a linearization can be made w.r.t. an equilibrium point $(x_{eq}, u_{eq})$ using first-order taylor expansion. Given a nonlinear system,
\begin{align*}
    \dot{x} &\approx F(x_{\mathrm{eq}},u_{\mathrm{eq}}) + \frac{\partial F(x_{\mathrm{eq}},u_{\mathrm{eq}})}{\partial x} (x-x_{\mathrm{eq}})\\ &+ \frac{\partial F(x_{\mathrm{eq}},u_{\mathrm{eq}})}{\partial u} (u-u_{\mathrm{eq}})
\end{align*}
Based on the equilibrium point , the linearized system can be re-written as $\dot{\hat{x}} = \hat{A} \hat{x} + \hat{B} \hat{u},$ with $\hat{A} = \frac{\partial F(x_{\mathrm{eq}},u_{\mathrm{eq}})}{\partial x}$ and $\hat{B} = \frac{\partial F(x_{\mathrm{eq}},u_{\mathrm{eq}})}{\partial u}$. Finally, we can compute the optimal gain $K_{\mathrm{LQR}}$ with the Riccati equation \eqref{eq:Reccati}.

\subsection{Higher Order Control Barrier Functions}
Given a continuously differentiable function  $h(x):\mathbb{R}^n \mapsto \mathbb{R}$. We denote the $r_b$-th derivative of $h(x)$ with respect to time $t$ of \eqref{eq:dynamicSystem} as 
\begin{equation} \label{eq:relative_degree}
h^{r_b}(x) = \pounds^{r_b}_{f} h(x)+\pounds^{r_b}_{g}\pounds^{r_b-1}_{f}h(x)u,
\end{equation}
The relative degree $r_\mathrm{b} \geq 0$ is defined as the smallest natural number such that $\pounds_{g} \pounds_{f}^{r_\mathrm{b}-1} h(x) u \neq 0$. Now, we formally introduce the definition of CBF. Given a time-varying function $h:\mathbb{R}^n \times [t_0, \infty) \mapsto \mathbb{R}$ that is $r_b^{th}$ order differentiable as
\begin{align}\label{eq:rb-order-functions}
   \Psi_{r}(x,t) = \dot{\Psi}_{r-1} +  \alpha_{r}(\Psi_{r-1}(x,t)), r=0,\dots,r_b.\\ \nonumber
\end{align}
From this, we denote a series of safety sets based on $\Psi_i$ as 
\begin{align}\label{eq:assosicateSets}
    \mathfrak{C}_{r} = \{x \in{\mathbb{R}^n}|\Psi_{r_b}(x,t)  \geq 0 \}, r=1,\dots,r_b. \\ \nonumber
\end{align}
\begin{definition} \cite{Xiao2019}
Given the functions defined in \eqref{eq:rb-order-functions} and safety sets \eqref{eq:assosicateSets}, the $r_b^{th}$ order function $h:\mathbb{R}^n \times [t_0, \infty) \mapsto \mathbb{R}$ is a Higher Order Control Barrier Function (HOCBF) for system \eqref{eq:dynamicSystem} if there exists class $\mathcal{K}$ functions $\alpha_1,\dots,\alpha_{r_b}$ such that 
\begin{equation}\label{eq:hocbf}
    \Psi_{r_b}(x(t),t) \geq 0,
\end{equation}
$\forall$ $(x,t) \in \mathfrak{C} \times [t_0,\infty) $, where $\mathfrak{C}:=\mathfrak{C}_0 \cap \dots \cap \mathfrak{C}_{r_b}$. Then, the state trajectory is always safe \cite{xu2015robustness}., i.e., the system trajectory is always safe.
\end{definition}
\begin{definition}\label{def:ForwardInvariance}
Given an initial state $x_0 = x(0)$, the set $\mathfrak{C}$ is called forward invariant if for every $x_0 \in \mathfrak{C}$, $x(t)\in  \mathfrak{C}, \forall t$. 
\end{definition}
\begin{thm} \cite{xu2015robustness}
Given system \ref{eq:dynamicSystem} and a continuous function $h$, if there exists a HOCBF 
as in \ref{eq:rb-order-functions}, then the set $\mathfrak{C}$ is forward invariant (i.e., the system is safe).
\end{thm}

\begin{definition}[Safe Trajectory]
    We define a safe trajectory for system (\ref{eq:dynamicSystem}) in $\mathfrak{C}$ as $\sigma :=(\mathbf{u},x)$, where given control inputs $\mathbf{u}:[0,T]\to\mathcal{U}$ the produced system trajectory $x(t)$ is subject to (\ref{eq:dynamicSystem}) and $x(t)\in\mathfrak{C}\subseteq \mathcal{X}_{\mathrm{safe}},\; \forall t\in[0,T]$. 
\end{definition}
\begin{definition}[Trajectory Cost]
    Given a produced control and state trajectory $\sigma = (\mathbf{u},x)$,  with $\mathbf{u}:[0,T]\to\mathcal{U}$, we define the state and control trajectory cost as follows. 
    \begin{equation} \label{eq:pathcost}
        \mathrm{c}(\sigma):= \int_{0}^T x^T Q x + u^T R u \:  dt,
    \end{equation}
where Q and R are the same weight matrices defined in \eqref{eq:LQRcost}.
\end{definition}

\section{Problem Formulation}
\begin{problem}\label{Problem1}
Given an initial state $x_{\mathrm{init}} \in \mathcal{X}_{\mathrm{init}} \subset \mathfrak{C}\subseteq \mathcal{X_{\mathrm{safe}}}$, where $\mathcal{X}_{\mathrm{init}}$ is an initial obstacle free set, and a bounded goal region  with $\mathcal{X}_{\mathrm{goal}} = B_{r_{\mathrm{goal}}} (x_{\mathrm{goal}})$ for some pre-defined radius $r_{\mathrm{goal}}$, such that $\mathcal{X}_{\mathrm{goal}} \subset \mathfrak{C}$. Find control inputs $\textbf{u}:[0,T] \mapsto \mathcal{U}$, where $T \in \mathbb{R}^{+}$ is the time horizon, that produces the path $\sigma$ such that $x(T) \in \mathcal{X}_{\mathrm{goal}}$ and $x(t) \in \mathfrak{C}, \forall t \in [0, T]$, with the optimal cost of the path, $\mathrm{C}^\ast(\sigma)$, being the optimal one in $\mathfrak{c}$, i.e.,
\begin{equation}
    \mathrm{c}^\ast(\sigma):= \mathrm{argmin}_{\mathbf{u}\in\mathcal{U}, \forall t \in [0,T],\; T \in \mathbb{R}^{+},\;(\ref{eq:dynamicSystem})} \mathrm{c}((\mathbf{u},x))
\end{equation}
\end{problem}

\section{LQR-CBF-RRT*}
In this section, we introduce \textbf{LQR-CBF-RRT*} algorithm \ref{alg:Ada_LQR-CBF-RRT*} as an extension of CBF-RRT \cite{yang2019sampling} that incorporates HOCBF constraints \eqref{eq:hocbf} checking for collision avoidance, and LQR policy \eqref{eq:LQRPolicy} for optimal control synthesis. We aim to solve \ref{Problem1} using sampling-based motion planning. Our framework encodes goal-reaching and safety requirements during the control synthesis procedure in the $\textit{steering}$ function (Line \ref{alg: steer1}, \ref{alg: steer2} in Algorithm \ref{alg:Ada_LQR-CBF-RRT*}). To ensure asymptotic optimality \cite{solovey2020revisiting}, an $\textit{rewiring}$ procedure (Line \ref{line:rewring_begin} in Algorithm \ref{alg:Ada_LQR-CBF-RRT*}) is included. For each iteration, a state, denoted as $x_{\mathrm{samp}}$ is sampled from an uniform distribution on the configuration space. The $x_{\mathrm{samp}}$ then used as an input in function $\texttt{NearbyNode}$ to find a set $\mathcal{X}_{\mathrm{near}}$ of its nearby nodes. The $\texttt{Nearest}$ is then used to select its closest neighbor node $x_{\mathrm{nearest}}$ from the tree. Next, the function $\texttt{LQR-CBF-Steer}$ \ref{algorithm:LQR-CBF-Steer} is used to synthesize controls based on computed optimal gain and steer the state from $x_{\mathrm{nearest}}$ to $x_{\mathrm{samp}}$. The resultant trajectory is denoted as $\sigma$ and its end node in $\sigma$ is $x_{\mathrm{new}}$. At the stage of $\texttt{ChooseParent}$, the function takes in the computed $x_{\mathrm{new}}$ and set  $\mathcal{X}_{\mathrm{near}}$ to compute state trajectories and corresponding costs from $x_{\mathrm{new}}$ to all the nearby nodes defined in $\mathcal{X}_{\mathrm{near}}$. The trajectory with the minimum cost, denoted as $\sigma_{\mathrm{min}}$, is used and its end node is defined as $x_{\textrm{min}}$. After this step, $x_{\mathrm{new}}$ and its edge are added to the tree. To asymptotically optimize the existing tree, the $\texttt{Rewire}$ function is used. During this procedure, the $\texttt{LQR-CBF-Steer}$ is used to check whether the cost for each nearby node $x_{\mathrm{nn}} \in \mathcal{X}_{\mathrm{near}}$ can be optimized. The $x_{\mathrm{nn}}$ that returns the minimum costs is rewired to $x_{\mathrm{new}}$, and the corresponding cost is updated. 

\subsection{LQR-CBF-RRT* Algorithm}
The detailed algorithm is introduced as the following. We define a tree $\mathcal{T} = (\mathcal{V},\mathcal{E})$, where $\mathcal{V}$ is a set of nodes and $\mathcal{E}$ is a set of edges. 
\begin{itemize}
\item \textbf{Nearby Node} utilizes a pre-defined euclidean distance $d$ to find a set of nearby nodes $\mathcal{X}_{\mathrm{near}}$ in $\mathcal{T}$ that is closest to $x_{\mathrm{sample}}$.

\item \textbf{Nearest} returns the nearest node from $\mathcal{X}_{\mathrm{near}}$ w.r.t. $x_{\mathrm{sample}}$.

\item \textbf{ChooseParent}
The procedure (Algorithm \ref{alg:Ada_LQR-CBF-RRT*} Line 10-14) tries to find collision-free paths between $x_{\mathrm{new}}$ w.r.t. all its neighboring nodes. If there exists a collision-free path between $x_{\mathrm{new}}$ and $x_{\mathrm{nn}} \in \mathcal{X}_{\mathrm{near}}$, the corresponding cost is calculated. The \texttt{ChooseParent} procedure then selects the $x_{\mathrm{nn}}$ with the lowest cost \eqref{eq:LQRcost} as the parent of $x_{\mathrm{new}}$.
  
 \item \textbf{Rewire} is defined in Algorithm \ref{alg:Ada_LQR-CBF-RRT*} Line 16-20. It evaluates and optimizes the LQR cost \eqref{eq:LQRcost}. This function checks a selected node's neighborhood and calculates the costs w.r.t. all the neighboring nodes. A new state trajectory from the current node to the nearby node is added to $\mathcal{T}$.

\item \textbf{LQRSolver}
The method (Algorithm \ref{alg:Ada_LQR-CBF-RRT*} Line 4) computes optimal gain matrix $K_{\mathrm{LQR}}$ from Riccati equation \eqref{eq:Reccati}. For a nonlinear system, the system can be linearized locally \cite{perez2012lqr} and then computes $K_{\mathrm{LQR}}$ based on linearized system dynamics.
  \item \textbf{LQR-CBF-Steer} The steering function (Algorithm \ref{algorithm:LQR-CBF-Steer}) contains two components: (i) The LQR controller (ii) CBF safety constraints. Given two states $(x_{\mathrm{current}},x_{\mathrm{next}})$, the LQR controller generates a sequence of optimal controls $u^*_i$, that steers state trajectory based on \eqref{eq:linearSystem}. At each time steps, the CBF constraints \eqref{eq:hocbf} are checked to ensure generated path is collision free.  If none of the CBF constraints are active, then the \texttt{LQR-CBF-steer} is used for steering $x_{\mathrm{current}}$ to $x_{\mathrm{next}}$, and node $x_{\mathrm{next}}$ is added to the tree. Otherwise, the extension stops at the first encounter of \eqref{eq:hocbf} that is violated. Then, the end state $x_{\mathrm{new}}$ and its trajectory $\sigma$ are added to the tree.

\end{itemize}
\begin{rem}
Thanks to the \textit{forward invariance} property from CBFs \cite{yang2019sampling}, no explicit collision checking is required. The generated state trajectory and controls from \textbf{LQR-CBF-Steer} are guaranteed to be safe. 
\end{rem}

\setlength{\textfloatsep}{15pt}
\begin{algorithm}[t]\scriptsize
	\textbf{Initialization:}   $\mathcal{T} = (\mathcal{V},\mathcal{E})$, $\mathcal{V} \leftarrow$ $\{x_{\mathrm{init}}\}$; $\mathcal{E} \leftarrow \emptyset$; $i = 0$; $\,\texttt{adapFlag}=\texttt{True}$, $\,\texttt{optDensityFlag}\,=\,\texttt{False}$, and $r = \eta$\label{line:alg1_init}\\
	\While{$i<N$}
	{$x_{\mathrm{samp}}\leftarrow\texttt{Sample}(\mathcal{G},\mathcal{V},\texttt{adapFlag})$ \label{Line: Alg1.Sampling}\\
            $x_{nearest} \leftarrow \texttt{Nearest}(\mathcal{V},x_{\mathrm{samp}})$\label{line:1st_v_nearest}\\
            $x_{\mathrm{new}}, \sigma \leftarrow \texttt{LQR-CBF-Steer}(x_{\mathrm{nearest}}, x_{\mathrm{samp}})$\label{alg: steer1}\\ 
            $\mathcal{X}_{near}\leftarrow\texttt{NearbyNode}(\mathcal{V},x_{nearest})$ \label{alg:nearbyNode}\\
		$r = \min\{\lambda(\log(|\mathcal{V}|)/|\mathcal{V}|)^{1/(d+1)},\eta\} $\label{line:alg1_ball_radi}\\
		$\mathcal{X}_{near}\leftarrow\texttt{Near}(\mathcal{V},r,x_{new})$\\
		minCost $\leftarrow$ $\infty$; $x_{\mathrm{min}},     \sigma_{\mathrm{min}} \leftarrow \texttt{None}, \texttt{None}$\label{line:best_parent_begin}\\
            \ForEach{$x_{\mathrm{near}}\in \mathcal{X}_{\mathrm{near}}$}
		{	$\sigma$ $\leftarrow$ \texttt{LQR-CBF-Steer}                          ($x_{\mathrm{new}}$, $x_{\mathrm{nn}}$)\label{alg: steer2}\\
			\If{$x_{\mathrm{nn}}.\texttt{cost}$+Cost($\sigma$) $<$ minCost}
			{
				minCost $\leftarrow$ $x_{\mathrm{nn}}.\texttt{cost}$+Cost($\sigma$)\\
				$x_{\mathrm{min}}$ $\leftarrow$ $x_{\mathrm{nn}}$; $\sigma_{\mathrm{min}} \leftarrow \sigma$
			}
		\label{line:best_parent_end}}
		$\mathcal{V}\leftarrow\mathcal{V}\cup\{x_{new}\}$; $\mathcal{E} \leftarrow \mathcal{E} \cup \{x_{\mathrm{min}}, x_{\mathrm{new}}\}$\\
		\ForEach{$x_{\mathrm{nn}} \in \mathcal{X}_{near}$} 
		{  \label{line:rewring_begin}
                $\sigma$ $\leftarrow$ \texttt{LQR-CBF-Steer}($x_{\mathrm{new}}$, $x_{\mathrm{nn}}$)\\
			\If{$x_{\mathrm{new}}.\texttt{cost}$+Cost($\sigma$) $<$ $x_{\mathrm{nn}}.\texttt{cost}$}
			{ $x_{\mathrm{nn}}.\texttt{parent} \leftarrow 
        x_{\mathrm{new}}$\\
				$x_{\mathrm{new}}.\texttt{cost} = $Cost($x_{\mathrm{new}}$)
			}
		\label{line:rewring_end}}
		$\mathcal{T},\;\mathcal{G}\leftarrow\texttt{extToGoal}(\mathcal{V},x_{\mathrm{new}},\texttt{adapFlag})$;	$i\leftarrow i+1$ \label{line:alg1_ CBF-RRT*_ext2goal}}\Return$\mathcal{T}$
	\caption{LQR-CBF-RRT$^\ast$}
	\label{alg:Ada_LQR-CBF-RRT*}
\end{algorithm}
\vspace{-10 pt}
\begin{algorithm}[t]\scriptsize
	$\mathbf{x} \leftarrow \texttt{None}, \mathbf{u} \leftarrow \texttt{None}$\\
        $\mathbf{x}.\texttt{add}(x_{\mathrm{current}})$\\
        $\mathrm{K_{LQR}} \leftarrow \texttt{LQRsolver}(x_{\mathrm{current}}, x_{\mathrm{next}})$\\
        \ForEach{$t < \mathrm{T}$}
		{$x^{'}, u  \leftarrow \texttt{Integrator}(x_{\mathrm{current}}, \mathrm{K_{LQR}})$\\
	       \eIf{$\textrm{Satisfied} \leftarrow \texttt{CBFconstraints}(x^{'}, u)$}
			{$\mathbf{x}.\texttt{add}(x^{'})$; $\mathbf{u}.\texttt{add}(u)$\\
				$x_{\mathrm{current}} \leftarrow x^{'}$
			}
                {
                \Return $\sigma = (\mathbf{x},\mathbf{u})$
                }
		}
	\caption{LQR-CBF-Steer($x_{\mathrm{current}}, x_{\mathrm{next}}$)}
	\label{algorithm:LQR-CBF-Steer}
\end{algorithm}
\setlength{\textfloatsep}{14pt}
\setlength{\textfloatsep}{15pt}
\begin{algorithm}[t]\scriptsize
	$\texttt{u}\sim \texttt{Uniform}(0,1)$\\
	\eIf{$\texttt{u}\;\leq 0.5\; \wedge\;\mathcal{G}\neq\emptyset$}
		{
			\eIf{\texttt{optDensityFlag}}
				{
					$\mathcal{X}\sim\hat{\mathfrak{g}}^\ast(x)$\\
					\Return $(x)$\label{line:alg_ smple_from_opt_est}
				}
				{
					\eIf{$\texttt{mod}(|\mathcal{G}|, n_v)=0$}
						{
					    $\mathfrak{E}\leftarrow\texttt{Quantile}(\mathcal{G},\varrho)$\Comment{Assign the elite set}\\\label{line:alg2_strt_discrtzng}
						$\hat{\mathfrak{g}}(x)\leftarrow\texttt{CE\_Estimation}(\mathfrak{E},m)$\Comment{Compute PDF of $\mathfrak{E}$}\label{line:Alg2_CE_estimation}
							\\ \Return$x\sim\hat{\mathfrak{g}}(x)$ \label{line:alg2_smpl_form_g_est}
						}
						{
							\Return $x\sim\texttt{Uniform}(\mathcal{X})$
						}
				}
		}{\Return $x\sim\texttt{Uniform}(\mathcal{X})$}
	\caption{$x_{\mathrm{samp}}\leftarrow\texttt{Sample}(\mathcal{G},\mathcal{T},m)$}
	\label{alg:Sample}
\end{algorithm}
\setlength{\textfloatsep}{14pt}

\section{Efficiency Improvements}
In this section, we develop three strategies to improve the efficiency of the sampling-based process. Notably, when integrating RRT* with both linear and nonlinear systems, section~\ref{subsec: LQR} shows how the designed LQR steering function has better performance compare to offline MPC and iLQR. Moreover, since repetitively solving CBF-QP is computationally expensive, section~\ref{subsec: CBF} designs a QP-free mechanism to check CBF constraints. Lastly, we propose an adaptive sampling method to enhance efficiency and optimality in \ref{sec: experiment}.

\subsection{Efficient LQR computation~\label{subsec: LQR}}
For each steering process, applying the MPC~\cite{camacho2013model} and iLQR~\cite{tassa2012synthesis} requires iteratively solving optimization problems multiple times to obtain an optimal control. These approaches are not efficient as sampling-based methods need to keep using steering function to explore the state space. 

This leads to the fundamental motivation for applying LQR in our framework, i.e., providing an efficient steering process.
For linear dynamic systems, $\dot{x} = Ax + Bu$. The LQR feedback gain $K_{\mathrm{LQR}}$ can be computed through Riccati equation in~\eqref{eq:Reccati} i.e., $K_{\mathrm{LQR}} = LQR(A, B, Q, R)$, which only depends on pre-modeled matrices $A, B, Q, R$. We only need to compute it once and use it for every steering process.

For nonlinear systems, we can linearize it around a local goal as the equilibrium point. To reduce the computational burden, we only solve for the gain once for each steering process. While it outperforms offline MPC and iLQR in efficiency, the computation time can be further improved. More specifically, the linearization and computation of optimal control gain in the approximated dynamical system during the \textbf{Rewiring} procedure is costly. 
To mitigate this issue, we store the LQR feedback gain for each local goal of the steering process using a hash table to reduce computation during the rewiring procedure (lines 16 - 18 in Algorithm \ref{alg:Ada_LQR-CBF-RRT*})
\subsection{LQR--CBF-RRT* without solving QP~\label{subsec: CBF}}
The existing works \cite{yang2019sampling, ahmad2022adaptive} formulate the QP controllers and iteratively solve for the controls point-wise in time during \texttt{ChooseParent} and \texttt{Rewire} procedure, which can leads to the following problems: First, it requires extensively solving the QPs over time. Second, hyper-parameters have to be determined to ensure the feasibility of the QP \cite{zeng2021safety}.

In this work, instead of solving the CBF-CLF-QP or CBF-QP with minimum perturbations of a given reference controllers, we only check whether the CBF constraints~\eqref{eq:relative_degree} are satisfied and use them as the termination condition of the steering process. For example, given a reference trajectory $x_{0}u_{0}x_{1}\ldots u_{n-1}x_{n}$ from LQR optimal control, to reach the sampled goal from current state $x_{current} = x_{0}$. 
We check if moving from $x_{i}$ to $x_{i+1}$ via control $u_{i}$ for all $i\in \{0, 1,\ldots, n-1\}$ satisfies the CBF condition. We terminates the extension process if the CBF condition is violated at any step $i$. Different from~\cite{manjunath2021safe}, we do not ignore the whole reference path if it violates the CBF condition. Instead, we keep the prefix trajectory where the CBF constraint is always satisfied.


\subsection{Adaptive Sampling}
\label{sec:Adaptive_smplng}
We leverage LQR-CBF-RRT$^\ast$ with an adaptive sampling procedure \cite{ahmad2022adaptive} to focus sampling  promising regions of $\mathcal{X_{\mathrm{safe}}}$ in order to approximate the solution of Problem \ref{Problem1} with a fewer number of samples. We define $\mathcal{G}$ as set of control and state trajectories pairs, i.e., $\mathcal{G} := \{(\mathbf{x(t)}, \mathbf{u(t)})| x(t) \in \mathcal{X}_{\mathrm{safe}}, \forall t \in [0,T]\}$. The CEM \cite{Rubinstein1999} is used for IS, which is a multi-stage stochastic optimization algorithm that iterates upon two steps: first, it generates samples from a current Sampling Density Function (SDF) and computes the cost of each sample; second, it chooses an \emph{elite subset}, $\mathfrak{E}$, of the generated samples for which their cost is below some threshold; finally, the elite subset is used to estimate a probability density function (PDF) as if they were drawn as i.i.d samples and we define $m$ to be the number of elite trajectories. The CEM was first used in \cite{Kobilarov2012e} for RRT$^\ast$ importance sampling with Gaussian mixture models (GMM). We use the CEM IS procedure that we implemented in \cite{ahmad2022adaptive} in this paper (Algorithm \ref{alg:Sample}), since it utilizes weighted Gaussian kernel density estimate (WGKDE) for estimating the PDF of the elite samples. The number of mixtures in GMM has to be picked based on the workspace and is difficult to tune. Using WGKDE, however, mitigates this challenge. 

\section{Probabilistic Completeness and Optimality}
We provide the probabilistic proof for our algorithm. Following \cite{karaman2011sampling} we assume that Problem \ref{Problem1} is \textit{robustly feasible} with minimum clearance $\varepsilon>0$. Hence, $\exists\mathbf{u}\in\mathcal{U}$ such that it produces a system (\ref{eq:dynamicSystem}) trajectory $\phi$, where $\phi: [0,T] \mapsto \mathcal{X}_{\mathrm{safe}}$, $\phi(0)=x_{\mathrm{init}}$, $\phi(T) = x_{\mathrm{goal}}$, and $\forall t \in [0,T];$ $\Psi_{r_b}(\phi(t)) \geq 0 , \forall \phi \in \mathfrak{C}_0  \cap \dots \cap \mathfrak{C}_{r_b}$.

\begin{rem}
The assumption on the sets of initial state and goal state are required to ensure the existence of a solution. Without such assumption, there's a probability such that certain states $x_{\mathrm{init}},x_{\mathrm{goal}}$ will be rejected, and the state trajectory will never reach the goal. In practice, we can ensure the assumption holds by choosing appropriate HOCBF hyper-parameters. 
\end{rem}
\begin{lem}\label{lemm1}
\cite{li2016asymptotically} Given two trajectories $\phi$ and $\phi'$, as well as a period $T \geq 0$, such that $\phi(0)=\phi'(0)=x_{\mathrm{init}}$, the trajectories can be bounded by control
\begin{equation}\label{eq:trajbound}
    \lVert \phi'(T) - \phi(T) \rVert < K_u \cdot T \cdot e^{K_x \cdot T}\cdot \sup(\lVert u(t) - u'(t)\rVert),
\end{equation}
for $K_u, K_x \in \mathbb{R}> 0$.
\end{lem}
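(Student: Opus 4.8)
The plan is to prove Lemma~\ref{lemm1} by a Gr\"onwall-type argument applied to the difference of the two trajectories under the control-affine dynamics \eqref{eq:dynamicSystem}. First I would write $\phi$ and $\phi'$ in integral form: since $\phi(0)=\phi'(0)=x_{\mathrm{init}}$, we have $\phi(t)=x_{\mathrm{init}}+\int_0^t \bigl(f(\phi(s))+g(\phi(s))u(s)\bigr)\,ds$ and likewise for $\phi'$ with $u'$. Subtracting, $\phi'(t)-\phi(t)=\int_0^t\bigl[f(\phi'(s))-f(\phi(s))\bigr]\,ds+\int_0^t\bigl[g(\phi'(s))u'(s)-g(\phi(s))u(s)\bigr]\,ds$. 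I would then split the second integrand as $g(\phi'(s))\bigl(u'(s)-u(s)\bigr)+\bigl(g(\phi'(s))-g(\phi(s))\bigr)u(s)$, so that the control-mismatch term is isolated from the state-mismatch terms.

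Next I would invoke local Lipschitz continuity of $f$ and $g$ (stated in the preliminaries) together with boundedness of trajectories on the compact time interval $[0,T]$ and of the admissible controls in $\mathcal{U}$: this yields constants so that $\|f(\phi'(s))-f(\phi(s))\|\le L_f\|\phi'(s)-\phi(s)\|$, $\|g(\phi'(s))-g(\phi(s))\|\,\|u(s)\|\le L_g'\|\phi'(s)-\phi(s)\|$, and $\|g(\phi'(s))\|\le M_g$. Writing $e(t):=\|\phi'(t)-\phi(t)\|$ and $\delta:=\sup_{t\in[0,T]}\|u(t)-u'(t)\|$, these estimates give the integral inequality $e(t)\le M_g\,\delta\,t+K_x\int_0^t e(s)\,ds$ with $K_x:=L_f+L_g'$. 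Applying Gr\"onwall's inequality to this bound produces $e(t)\le M_g\,\delta\,t\,e^{K_x t}$, and evaluating at $t=T$ and setting $K_u:=M_g$ gives exactly \eqref{eq:trajbound} (the strict inequality being harmless, or obtainable by a slightly loose constant).

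The main obstacle is the justification of the uniform constants: $f$ and $g$ are only \emph{locally} Lipschitz, so one must first argue that both $\phi$ and $\phi'$ remain in a common compact set for $t\in[0,T]$ — this is where forward invariance of $\mathcal{X}_{\mathrm{safe}}$ (guaranteed by the HOCBF construction) and boundedness of $\mathcal{U}$ are used to pin down a compact tube on which the Lipschitz and sup-norm bounds hold, after which $L_f, L_g', M_g$ are legitimately finite. Since this lemma is quoted from \cite{li2016asymptotically}, I would keep this justification brief, state the needed compactness as following from the problem setup and the forward-invariance property, and present the Gr\"onwall estimate as the substance of the argument.
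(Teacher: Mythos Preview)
Your Gr\"onwall-based argument is correct and is exactly the standard proof of this estimate. Note, however, that the paper does not actually prove Lemma~\ref{lemm1}: it simply quotes the statement from \cite{li2016asymptotically} and uses it as a black box, so there is no ``paper's own proof'' to compare against---your write-up would in fact supply what the paper omits.
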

Lemma \ref{lemm1} ensures any two trajectories with the same initial state, the distance between their end states at time $T$ is bounded by the largest difference in their controls. 
\begin{rem}\label{rmrk:TunableParamsOfCBFs}
Consider steering from any $x_s\in\mathcal{X}_{safe}$ towards any reachable $x_f\in\mathcal{X}_{safe}$ using Algorithm \ref{algorithm:LQR-CBF-Steer}. By carefully selecting appropriate constants of the class $\mathcal{K}$ functions, $\alpha_1,\,...,\,\alpha_{\rho}$, of the HOCBF (see Defintion \ref{eq:hocbf}), we can produce $\textbf{u} = u(t_0), t(t_1),\dots,u(T_{OL})$, such that the bound \eqref{eq:trajbound} be given as $||x_f - x_f^\prime||=\mu<\frac{\varepsilon}{4}$, where $x_f$ and $x^\prime_f$  are the states at time $T_{OL}$ of the produced trajectories under the control inputs $\textbf{u}_{OL}$ and the CBF-QP control inputs $\textbf{u}_{LP}$, respectively, and $\mu\in\mathbb{R}_{>0}$.
\end{rem}
\begin{thm}
The LQR-CBF-RRT* (Algorithm \ref{alg:Ada_LQR-CBF-RRT*}) is probabilistically complete. 
\end{thm}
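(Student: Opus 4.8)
The plan is to follow the standard RRT*-style probabilistic completeness argument (as in Karaman--Frazzoli and Li et al.), adapted to the fact that our local planner is \texttt{LQR-CBF-Steer} rather than an exact BVP solver. First I would fix the assumed robustly feasible trajectory $\phi:[0,T]\mapsto\mathcal{X}_{\mathrm{safe}}$ with clearance $\varepsilon>0$ that is guaranteed to exist by the robust feasibility assumption, and cover it by a finite sequence of balls $B_{\varepsilon/4}(z_0),B_{\varepsilon/4}(z_1),\dots,B_{\varepsilon/4}(z_k)$ with $z_0=x_{\mathrm{init}}$, $z_k=x_{\mathrm{goal}}$, where consecutive centers are spaced so that $\|z_{j+1}-z_j\|$ is smaller than the one-step reachable distance of the LQR controller over a horizon of length at most $T_{OL}$. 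The number $k$ of such balls is finite because $\phi$ has compact image.

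Next I would argue that, conditioned on the tree already containing a node inside $B_{\varepsilon/4}(z_j)$, with strictly positive probability a subsequent sample $x_{\mathrm{samp}}$ lands in $B_{\varepsilon/4}(z_{j+1})$ (positive Lebesgue measure, and the \texttt{Sample} routine of Algorithm \ref{alg:Sample} still draws from a density that is bounded below on $\mathcal{X}$ by the uniform-with-probability-$\geq 1/2$ branch, so importance sampling does not destroy this lower bound). When that happens, \texttt{Nearest} returns the node $x_{\mathrm{nearest}}$ in $B_{\varepsilon/4}(z_j)$, and \texttt{LQR-CBF-Steer} is invoked from $x_{\mathrm{nearest}}$ toward $x_{\mathrm{samp}}$. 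Here I would invoke Remark \ref{rmrk:TunableParamsOfCBFs}: by the choice of the class-$\mathcal{K}$ constants $\alpha_1,\dots,\alpha_\rho$, the CBF constraints remain inactive along the whole LQR reference trajectory (because the tube of radius $\varepsilon/4$ stays inside $\mathcal{X}_{\mathrm{safe}}$ and the deviation $\mu$ of the actual trajectory from the CBF-QP trajectory is $<\varepsilon/4$), so the steering is not truncated; combined with Lemma \ref{lemm1}, which bounds $\|\phi'(T_{OL})-\phi(T_{OL})\|$ by the sup-norm control difference, the endpoint $x_{\mathrm{new}}$ lands inside $B_{\varepsilon/2}(z_{j+1})\subset\mathcal{X}_{\mathrm{safe}}$, and after \texttt{ChooseParent} this node is added to $\mathcal{T}$. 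Hence each ``hop'' succeeds with some fixed probability $p_j>0$.

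Then I would make the standard Borel--Cantelli / geometric-trials argument: in $N$ iterations, the number of attempts to propagate from ball $j$ to ball $j+1$ grows without bound, so the probability that none of them succeeds decays geometrically; taking a union bound over the finitely many hops $j=0,\dots,k-1$, the probability that the tree fails to place a node in $B_{\varepsilon/4}(x_{\mathrm{goal}})$ after $N$ iterations is at most $\sum_{j}(1-p_j)^{\lfloor N/k\rfloor}\to 0$ as $N\to\infty$. Since $B_{\varepsilon/4}(x_{\mathrm{goal}})\subseteq\mathcal{X}_{\mathrm{goal}}$ (taking $r_{\mathrm{goal}}\geq\varepsilon/4$, or just refining the cover), reaching that ball means the algorithm has found a feasible solution, which is exactly probabilistic completeness: $\lim_{N\to\infty}\Pr[\text{success}]=1$.

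The main obstacle I anticipate is the middle step --- rigorously showing that the LQR-driven, CBF-gated steering actually connects $B_{\varepsilon/4}(z_j)$ to a neighborhood of $z_{j+1}$ without the CBF check truncating the trajectory prematurely. This requires (i) that the LQR controller linearized about the goal equilibrium is genuinely able to make progress toward an arbitrary nearby sampled state within one horizon (a local controllability / small-time reachability fact about the linearization, which is implicitly needed for Lemma \ref{lemm1} to be useful here), and (ii) a careful quantitative use of Remark \ref{rmrk:TunableParamsOfCBFs} to guarantee $\mu<\varepsilon/4$ uniformly along the relevant hops so the whole prefix of the reference trajectory is retained. I would state these as the precise hypotheses the completeness proof rests on, and otherwise the counting argument is routine.
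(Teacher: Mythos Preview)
Your proposal is correct and follows essentially the same route as the paper: cover the robustly feasible trajectory by $\varepsilon/4$-balls, use Remark~\ref{rmrk:TunableParamsOfCBFs} together with Lemma~\ref{lemm1} to argue that \texttt{LQR-CBF-Steer} hops between consecutive balls with strictly positive probability, and finish with the Bernoulli-trials argument from \cite{RRTcompleteness}. The paper differs only in presentation: it first reduces LQR-CBF-RRT$^\ast$ to LQR-CBF-RRT (rewiring does not affect reachability) and is slightly more explicit about the success set, defining $\mathcal{S}=\mathfrak{B}_{\eta}(x_s)\cap\mathfrak{B}_{\varepsilon/4-\mu-\iota}(x_{i+1})$ so that any sample $x_f\in\mathcal{S}$ yields an endpoint in $\mathfrak{B}_\mu(x_f)\subset\mathfrak{B}_{\varepsilon/4}(x_{i+1})$; conversely, you are more careful than the paper about the adaptive sampler, noting that the uniform branch of Algorithm~\ref{alg:Sample} preserves a positive lower bound on the sampling density.
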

\begin{proof}
The completeness of RRT* is implied by the completeness of RRT \cite{karaman2011sampling}. Therefore, we need to prove the completeness of LQR-CBF-RRT. One can implement LQR-CBF-RRT$^\ast$ by mitigating Lines \ref{line:best_parent_begin}-\ref{line:best_parent_end} and Lines \ref{line:rewring_begin}-\ref{line:alg1_ CBF-RRT*_ext2goal} in Algorithm \ref{alg:Ada_LQR-CBF-RRT*}. 
 Given that the local motion planner \ref{algorithm:LQR-CBF-Steer}, and by leveraging Theorem 2 in \cite{RRTcompleteness}, we need to prove that the incremental state trajectory will propagate to a sequence of intermediate states until reaching $\mathcal{X}_{\mathrm{goal}}$. Assuming the trajectory $\phi$ of the solution of Problem \ref{Problem1} with $\varepsilon$ clearance and has a length $L$. Considering $m+1$ equidistant states $x_{i}\in\phi,\;i=1,\dots,m+1$, where $m=\left\lfloor\frac{4L}{\varepsilon} \right\rfloor$, we define a sequence of balls with radius $\varepsilon/4$ that are centered at these states. For state $x_i$, such ball is given by: $\mathfrak{B}_{\frac{\varepsilon}{4}}(x_i):=\{x_b\;|\;||x_i-x_b||\leq\frac{\varepsilon}{4}\}$. For the consecutive states $x_{i},x_{i+1}\in\varphi_{x}$, we want to prove that starting from $x_s\in\mathfrak{B}_{\frac{\varepsilon}{2}}(x_i)$ the steering function $\texttt{LQR-CBF-Steer}$ is able to generate a control and state trajectories that its end state $x_f^\prime$ fall in $\mathfrak{B}_{\frac{\varepsilon}{4}}(x_{i+1})$. Given Remark \ref{rmrk:TunableParamsOfCBFs}, we assign $\eta = \frac{\varepsilon}{4}+\mu+2\iota$ and $0<\iota<\frac{\varepsilon}{4}-\mu$. Next, we assign $\mathfrak{B}_{\eta}(x_s)$ and $\mathfrak{B}_{\frac{\varepsilon}{4}-\mu-\iota}(x_{i+1})$ at $x_s$ and $x_{i+1}$, respectively. Let $\mathcal{S}:=\mathfrak{B}_{\eta}(x_s)\cap\mathfrak{B}_{\frac{\varepsilon}{4}-\mu-\iota}(x_{i+1})$ denotes the successful potential end-states set. For any $x_f\in\mathcal{S}$, $\texttt{LQR-CBF-Steer}$ generates trajectories that fall in $\mathfrak{B}_{\mu}(x_f)\subset\mathfrak{B}_{\frac{\varepsilon}{4}}(x_{i+1})$. We denote $|.|$ as the Lebesgue measure, then, for $x_s$, the probability of generating states in $\mathcal{S}$ is $p=\frac{|\mathcal{S}|}{|\mathcal{X}|}$ and is strictly positive. The probability $p$ can be represented as success probability of the $k$ Bernoulli trials process \cite{RRTcompleteness} that models generating $m$ successful outcomes of sampling states that incrementally reach $\mathcal{X}_{goal}$. The rest of the proof follows the proof of Theorem 1 in \cite{RRTcompleteness}.
\end{proof}
Next, we tackle the asymptotic optimality with respect to a safety region, which is conservative and we are able to compute.  
\begin{assumption}\label{assum:lqr-cbf-steer_in_C}
    We assume that we have access to a (conservative) safety set $\mathfrak{C}\subseteq \mathcal{X}_{safe}$ in which the procedure LQR-CBF-Steer$(x_{\mathrm{current}},x_{\mathrm{next}})$ is able to produce a trajectory that converges to $x_{\mathrm{next}}$.  
\end{assumption}

Assumption \ref{assum:lqr-cbf-steer_in_C} implies that the optimal path on the original state space is not realizable by the proposed algorithm due to the CBF constraints. Rather, the algorithm will produce an asymptotically optimal path in a conservative set of the environment. Provided that our algorithm is an RRT$^\ast$ variant with an LQR-based local motion planner with a CBF-based sample rejection method, in the following theorem, we provide an asymptotic optimality result. At the $i$-th iteration of LQR-CBF-RRT*, we consider the produced path, $\sigma^i$, which connects $x_{\mathrm{init}}(0)$ to an $x(T)\in\mathcal{X}_{\mathrm{goal}}$, as a concatenation of paths produced by $\texttt{LQR-CBF-Steer}$.  

\begin{thm}
Consider the cost of the optimal solution of Problem \ref{Problem1} to be $\mathrm{c}^\ast(\sigma)$; the LQR-CBF-RRT* (Algorithm \ref{alg:Ada_LQR-CBF-RRT*}) produces an \textit{asymptotically optimal} solution in $\mathfrak{C}$ to Problem \ref{Problem1}. i.e., 
\begin{equation}
    P(\mathrm{lim}_{i\to\infty}\mathrm{c}(\sigma^i)=\mathrm{c}^\ast(\sigma)) = 1
\end{equation}
\end{thm}
\begin{proof} (Sketch) Given Assumption \ref{assum:lqr-cbf-steer_in_C}, the asymptotic optimality of the solution follows directly from Theorem 5 in \cite{karaman2010KinoDynRRTstar}  
\end{proof}

\begin{figure}[!t]\centering
	\subfloat[]{{\label{Fig2:a}\includegraphics[width=0.50\columnwidth]{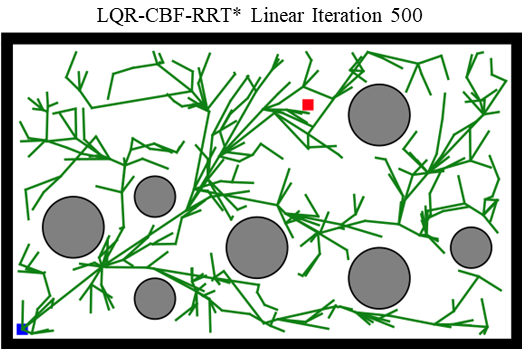} }}%
	\subfloat[]{{\label{Fig2:b}\includegraphics[width=0.50\columnwidth]{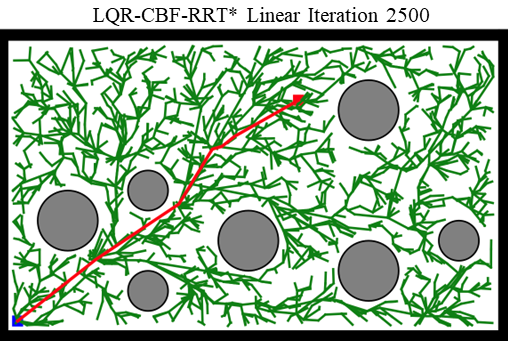} }}%
 $\qquad$
 	\subfloat[]{{\label{Fig2:c}\includegraphics[width=0.50\columnwidth]{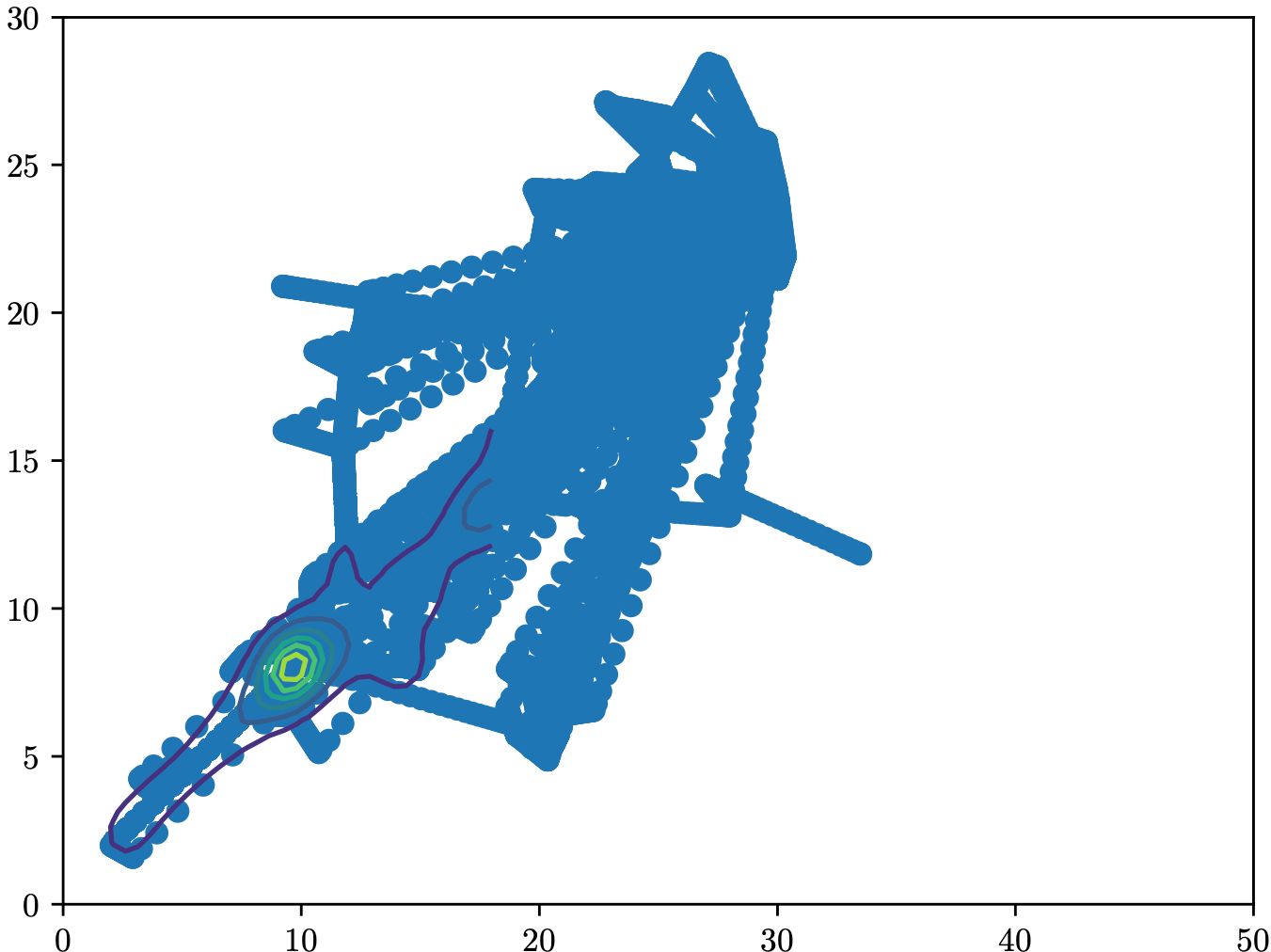} }}%
	\subfloat[]{{\label{Fig2:d}\includegraphics[width=0.50\columnwidth]{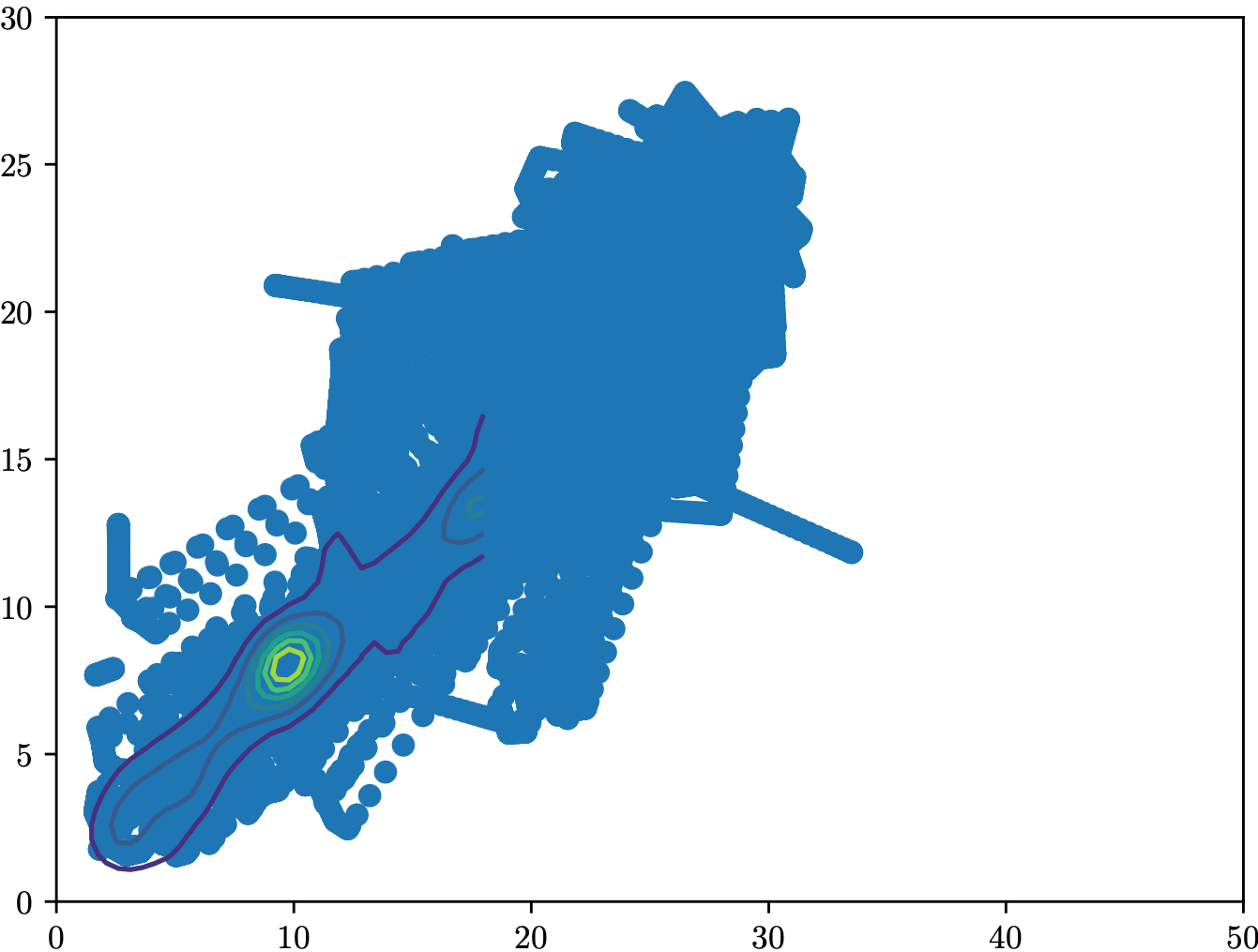} }}%
	\caption{We performed simulations on a double integrator (\ref{Fig2:a} to \ref{Fig2:d}  with adaptive sampling. The plots \ref{Fig2:a} (500 steps) and \ref{Fig2:b} (2500 steps) demonstrate how our algorithm explores the workspace. The \ref{Fig2:c} and \ref{Fig2:d} illustrate how SDF distribution is generated as the number of elite samples increases.}
	\label{fig:linear_example}
\end{figure}
\vspace{-\baselineskip}
\begin{figure}[t]\centering
	\subfloat[]{{\label{Fig2:e}\includegraphics[width=0.50\columnwidth]{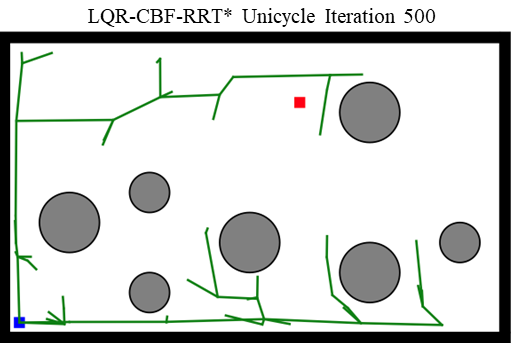} }}%
	\subfloat[]{{\label{Fig2:f}\includegraphics[width=0.50\columnwidth]{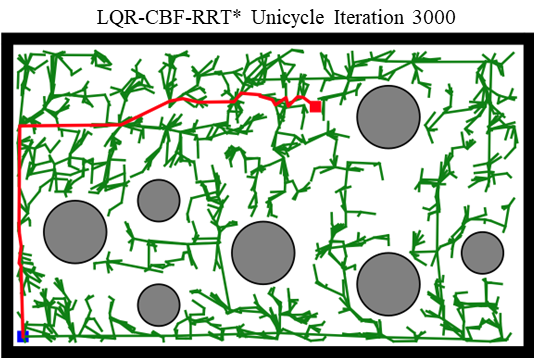} }}%
 $\qquad$
 	\subfloat[]{{\label{Fig2:g}\includegraphics[width=0.50\columnwidth]{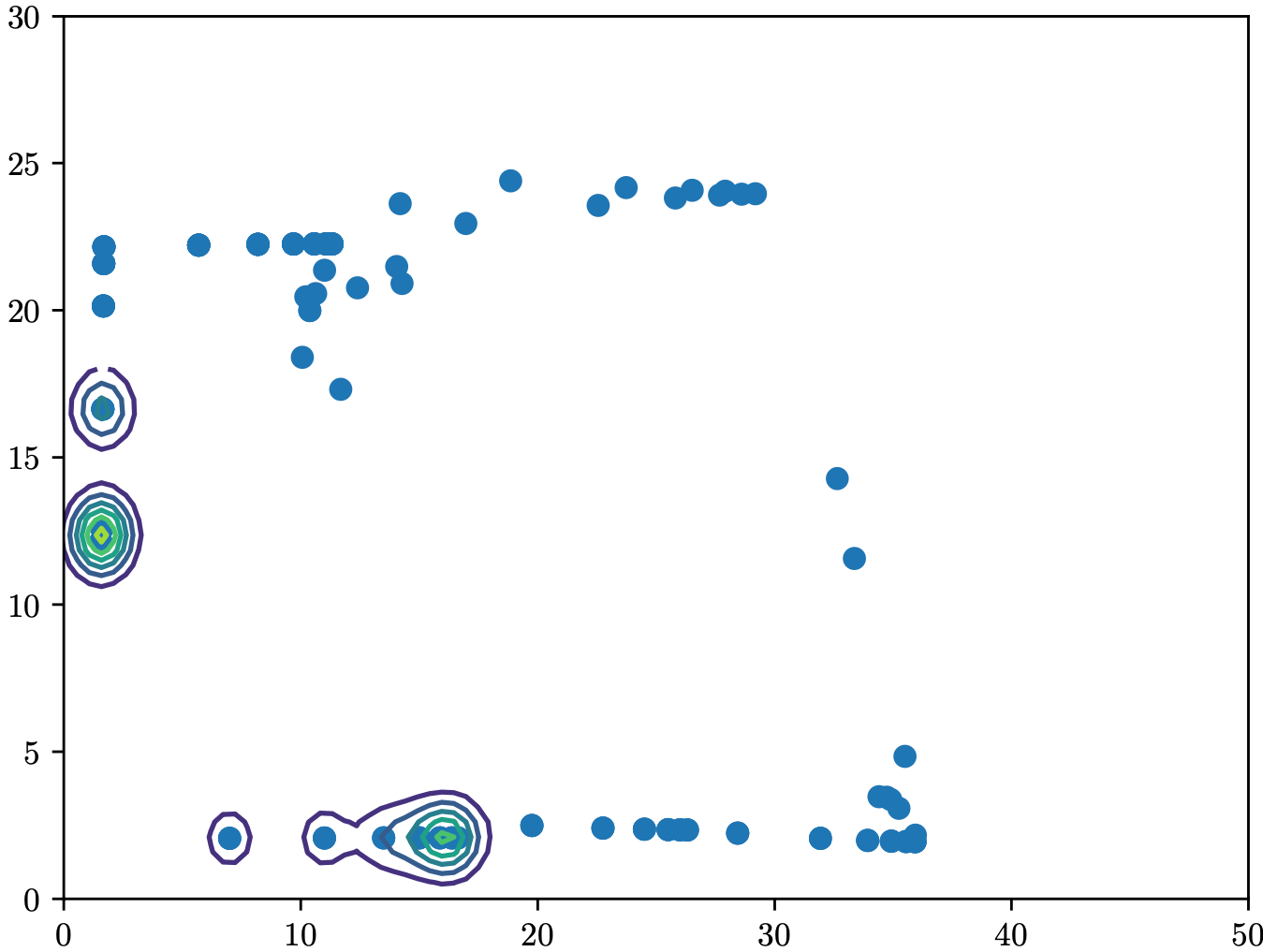} }}%
	\subfloat[]{{\label{Fig2:h}\includegraphics[width=0.50\columnwidth]{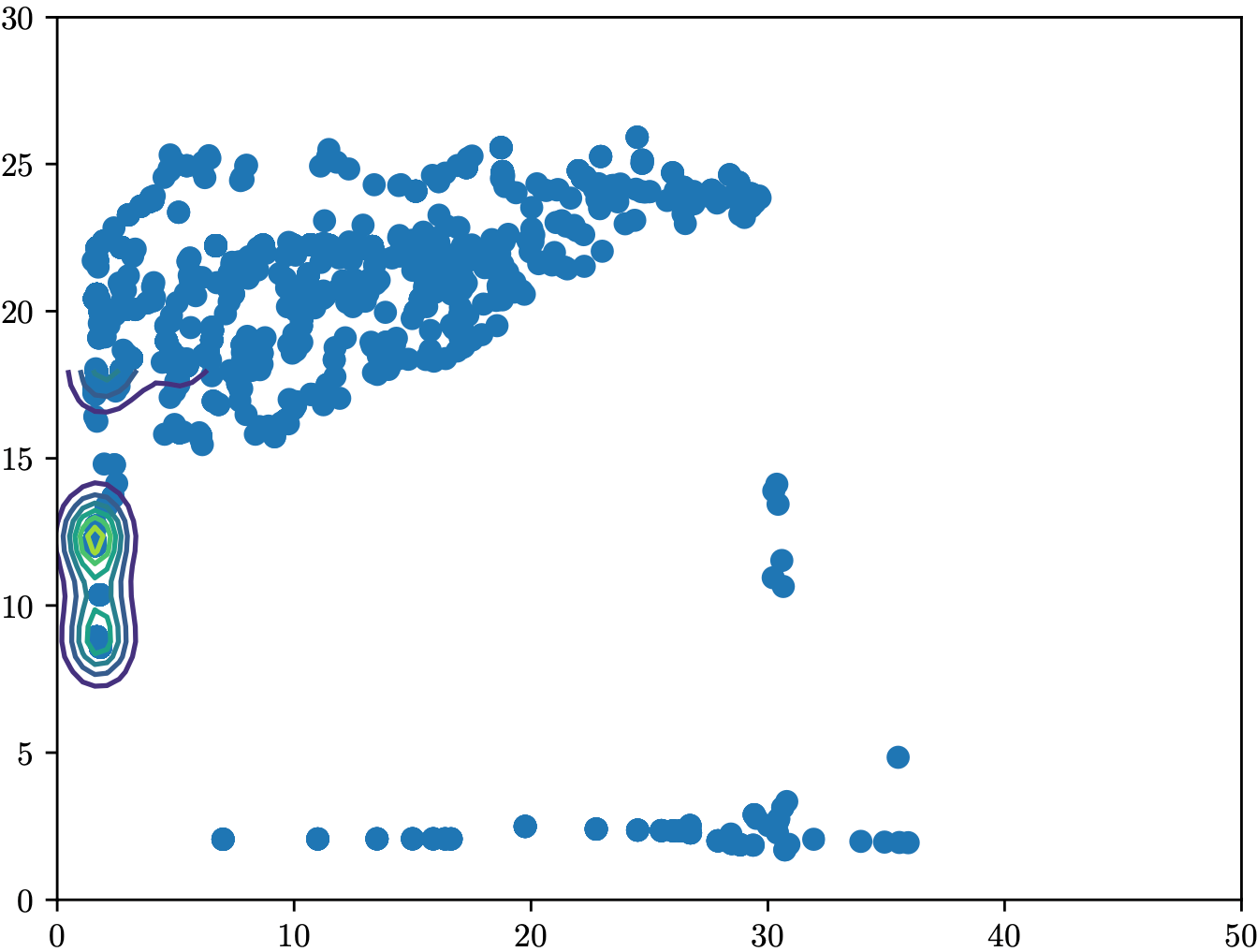} }}%
	\caption{ We performed the simulation on an unicycle model ( \ref{Fig2:e} to \ref{Fig2:h}) with adaptive sampling.
 The same environment configuration applies to the nonlinear system, where \ref{Fig2:e} (500 steps) and \ref{Fig2:f} (3000 steps) show the generated state trajectories. The \ref{Fig2:g} and \ref{Fig2:h} show the SDF level set and elite samples for the unicycle model.}
	\label{fig:unicycle_example}
\end{figure}

\section{Experimental Results~\label{sec: experiment}}
In this section, we illustrate our framework on two different systems. All experiments have the same workspace configuration, with initial state $x_{\mathrm{init}} = [2,2]$ and the goal state $x_{\mathrm{goal}}=[30,24]$. We consider obstacles to be circular and the simulations are performed on a MacBook Pro with an M1 Pro CPU. The code is available at ~\footnote{\url{https://github.com/mingyucai/LQR_CBF_rrtStar}}.

\subsection{Baseline Summary}
We performed several metrics comparison on our framework, including avoiding the construction QP, storing previous LQR feedback gain, and adaptive sampling. For the nonlinear system, we have (1) Our method.  (2) LQR-CBF-RRT* without storing feedback gain for nonlinear systems, i.e., naive adaptive LQR-CBF-RRT*; (3)  LQR-CBF-RRT* without both adaptive sampling and storing feedback gain. (4) QP based LQR-CBF-RRT*. 
For the linear system, we have (1) Our method. (5) LQR-CBF-RRT* with QP-solver during steering processes, i.e., LQR-CBF-RRT*-QP; The reason that we uses a separate system to do the comparison is because the QP controller with the nonlinear system is too conservative, i.e., many of the QPs could become infeasible. 

\noindent \textbf{Nonlinear System} We conduct a performance comparison on an unicycle model. We perform five experiments with 2000 iterations for each baseline with different random seeds. In table \ref{tb:example3}, we list the time it takes to complete the motion planning for different baselines.
\begin{figure}
    \centering
    \includegraphics[width=0.65\linewidth]{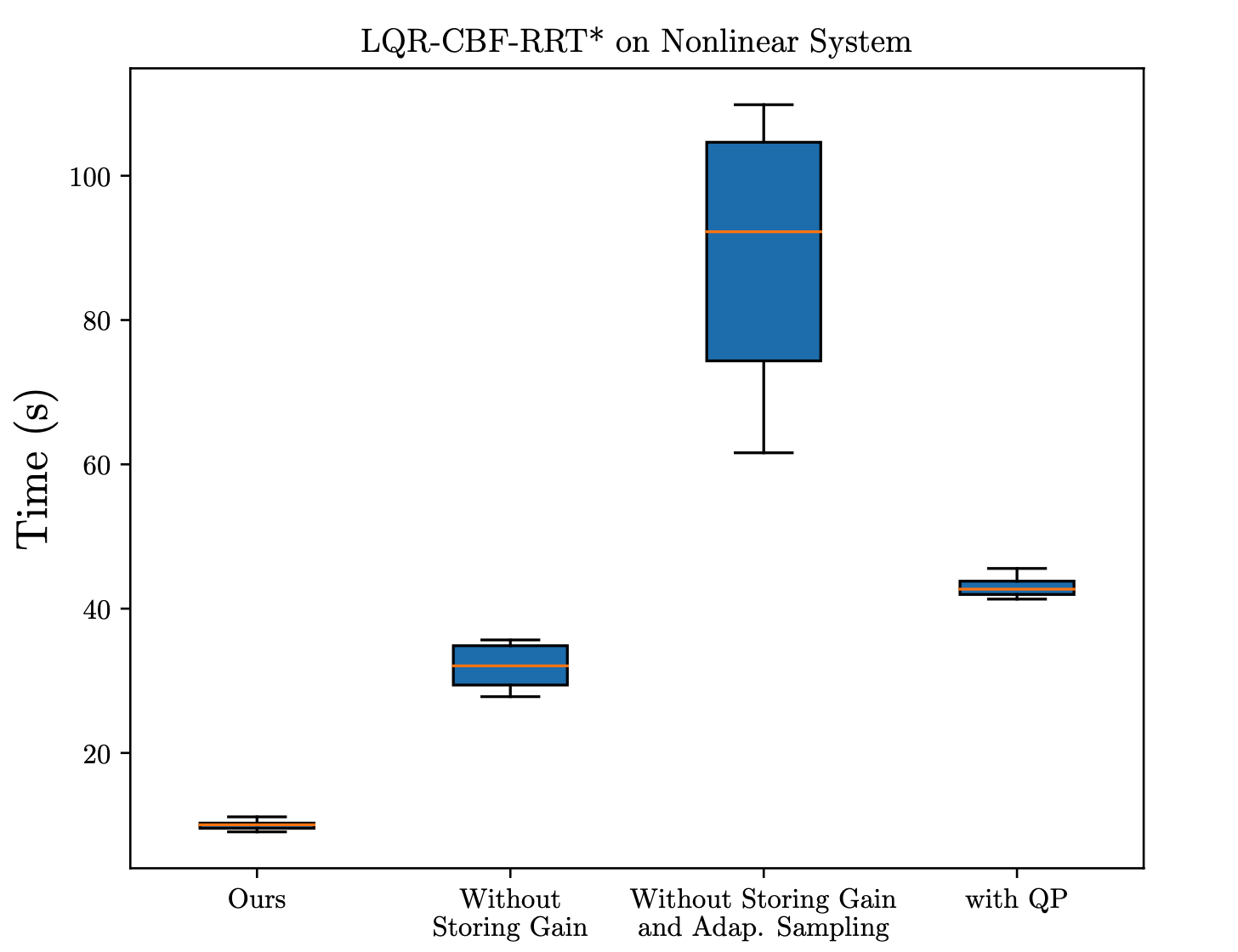}
    \caption{The figure shows our method outperforms the other three cases by a significant margin. The last column (with QP) was the method implemented in \cite{yang2019sampling}.}
    \label{fig:benchmark_time}
\end{figure}
\begin{figure}
    \centering
    \includegraphics[width=0.8\linewidth]{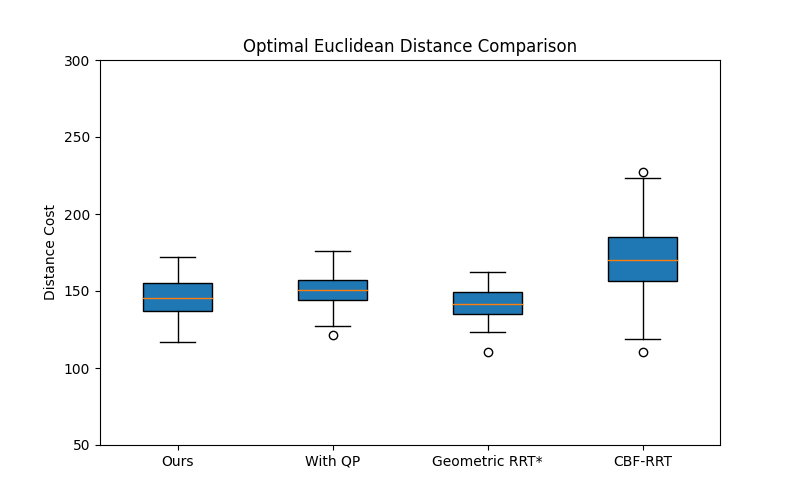}
    \caption{We compare our method w.r.t. other planning methods in terms of the optimal distance cost.}
    \label{fig:benchmark_cost}
\end{figure}
Based on the result, implementing a hash table for optimal gain $K_{\mathrm{LQR}}$ can significantly improve the performance, as it avoids the repeated linearization and recalculation of the gain matrices. The graphical result can be found in Figure \ref{fig:benchmark_time}. 

\begin{table}[htp]
\centering
\caption{Efficiency Comparison (Unit: Seconds)}\label{tb:example3}
\resizebox{\columnwidth}{!}{%
\begin{tabular}{ |c|c|c|c|c|c|c|c| } 
\hline
 Baseline & Seed 0& Seed 20& Seed 42 & Seed 45 &Seed 100 & Mean & Std \\
\hline
  \textbf{Ours} & \textbf{10.04} & \textbf{9.06} & \textbf{9.56} & \textbf{10.27} & \textbf{11.14} & \textbf{10.01} & \textbf{0.70}\\
  \hline
  (2) & 32.07 & 27.8 & 29.41 &34.86 &35.66 &31.96&3.03\\
  \hline
  (3) & 74.34 & 61.6& 109.92 & 104.64& 92.24&88.55&18.20\\
  \hline
  (4) & 41.32 & T/O& 45.57 & 42.18& 43.22&43.07&1.84\\
  \hline
\end{tabular}}
\end{table}

\noindent \textbf{Linear System} For the linear system comparison, our proposed method is about 86\% faster than using LQR-CBF-RRT*-QP (Baseline (5)). The result can be found in Table \ref{tb:qp-vs-noqp}. 
\begin{table}[htp]
\centering
\caption{Efficiency Comparison (Unit: Seconds)}\label{tb:qp-vs-noqp}
\resizebox{\columnwidth}{!}{%
\begin{tabular}{ |c|c|c|c|c|c|c|c| } 
\hline
Baseline & Seed 0& Seed 20& Seed 42 & Seed 45 &Seed 100 & Mean & Std \\
\hline
  \textbf{Ours} & \textbf{11.3} & \textbf{11.15} & \textbf{9.75} & \textbf{10.91} & \textbf{12.42} & \textbf{11.11} & \textbf{0.95}\\
  \hline
  (5) & 80.76 & 79.17 & 78.42& 81.90&80.72 &80.194&1.39\\
  \hline

\end{tabular}}
\end{table}
Our method is compared with an offline MPC with time horizon of 5 and discrete time interval of 0.05. The MPC based method went through 2000 iterations over 41 minutes during the experiment. Finally, we performed an optimal Euclidean distance comparison Fig. \ref{fig:benchmark_cost}. The result shows that LQR-CBF-RRT* does optimize the euclidean distance w.r.t. CBF-RRT.
\vspace{-5 pt}
\subsection{Numerical Example 1: Double Integrator Model}
\label{example1}
In this example, we performed our sampling-based motion planner on a double integrator model with the linear dynamics with the state $[x_1,x_2, x_3, x_4]$ : $\Ddot{x_{1}} = u_{1}$ and $\Ddot{x_{3}} = u_{2}$
where $[x_1,x_3]$ is the position, $[x_2,x_4]$ is the velocity. We control the system using acceleration $[u_1,u_2]$. For the $i$-th obstacle, the corresponding $i$-th safety set can be defined based on the function $h_i(x) = (x_1-x_{1,i,o})^2+(x_3-x_{3,i,o})^2-r_i^2$, where $[x_{1,i,o},x_{3,i,o}]$ is the centroid of the $i$-th obstacle and $r_i$ is the radius. We define CBF constraint $\zeta_i$ for obstacle $i$ as
\begin{align*}
    \zeta_i & = 2x_2^2+2x_4^2+2(x_1 -x_{1,i,o} )u_1 + 2(x_3 -x_{3,i,o} )u_2 \\
            & + k_1 h_i + 2 k_2 [(x_1 -x_{1,i,o})x_2+(x_3 -x_{3,i,o})x_4] \geq 0.
\end{align*}
We perform constraint checking in \texttt{LQR-CBF-Steer} with $\zeta_i$ in both edge extension and rewiring procedures. The satisfaction of CBF constraints $\zeta_i \geq 0 , \forall i$ guarantees the generated controls and state trajectories are safe. We iterate through in total of 2500 steps, and the final result is shown in Figure \ref{Fig2:d}. 

\subsection{Numerical Example 2: Unicycle Model}
In this example, we validate our framework on a unicycle model with system dynamics $\dot{x_1} = v \cos(\theta),\dot{x_2} = v \sin(\theta),\dot{\theta} = \omega $, where $[x_1,x_2]$ is the position, and $\theta$ is the heading angle. The control input $u = [v, \omega]$ consists of the translational and angular velocity. Given this system dynamics, the control inputs $v$, and $\omega$ has mixed relative degree ($v$ has a relative degree 1 and $\omega$ has a relative degree 2), which does not allow us to construct a CBF constraint directly. To bypass this issue, we fix the translational velocity $v$ and only enforce CBF constraints on the angular velocity $\omega$. Next, the CBF constraint for the $i$-th obstacle is constructed as the following
\begin{align*}
    \zeta_i(x) &= 2x_1v^2\cos^2{\theta}+2x_2v^2\sin^2{\theta}\\ \nonumber&+[2(x_2-x_{2,i,o})v\cos{\theta}-2(x_1-x_{1,i,o})v\sin{\theta}]\omega \\ \nonumber &+k_1 h_i(x) + k_2 \pounds_{f} h_i(x) \geq 0, \\ \nonumber
    h_{i}(x) &= (x_1 - x_{1,i,o})^2 + (x_2 - x_{2,i,o})^2-r_i^2 \\ \nonumber
    \pounds_{f} h_{i}(x) &= 2v(x_1 - x_{1,i,o})\cos{\theta} + 2v(x_2 - x_{2,i,o})\sin{\theta}. \\ \nonumber
\end{align*}
The algorithm performs 3000 iterations before termination, and the result can be found in Figure \ref{fig:unicycle_example}.

\subsection{Hardware Experiment}
\begin{figure}
    \centering    \includegraphics[width=0.65\linewidth]{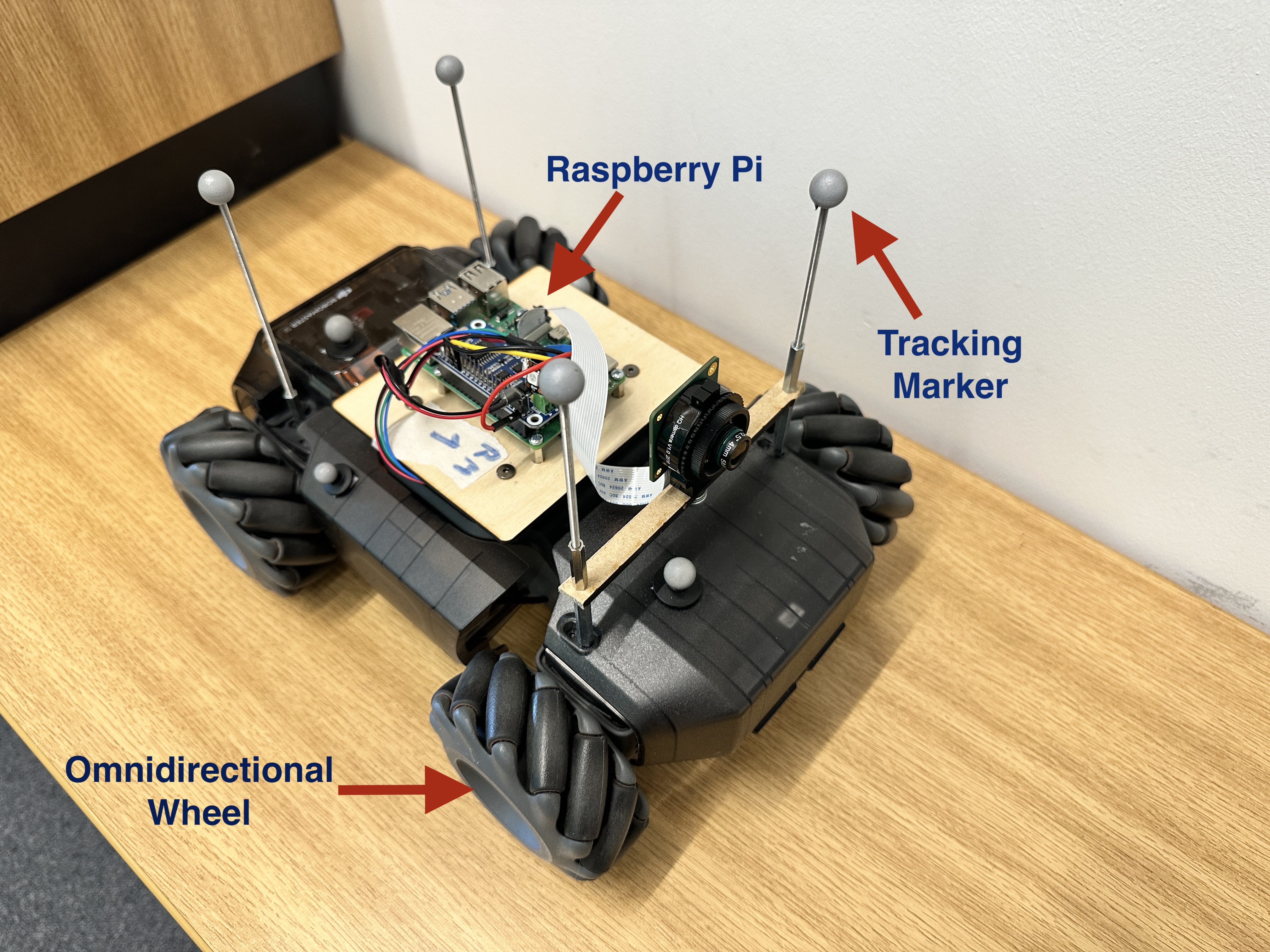}
    \caption{We use a customized DJI Robomaster robot as our experimental platform \cite{9811744}. The robot is equipped with a Raspberry Pi for onboard computation.}
    \label{fig:robomaster}
\end{figure}
\begin{figure}
    \centering    \includegraphics[width=0.65\linewidth]{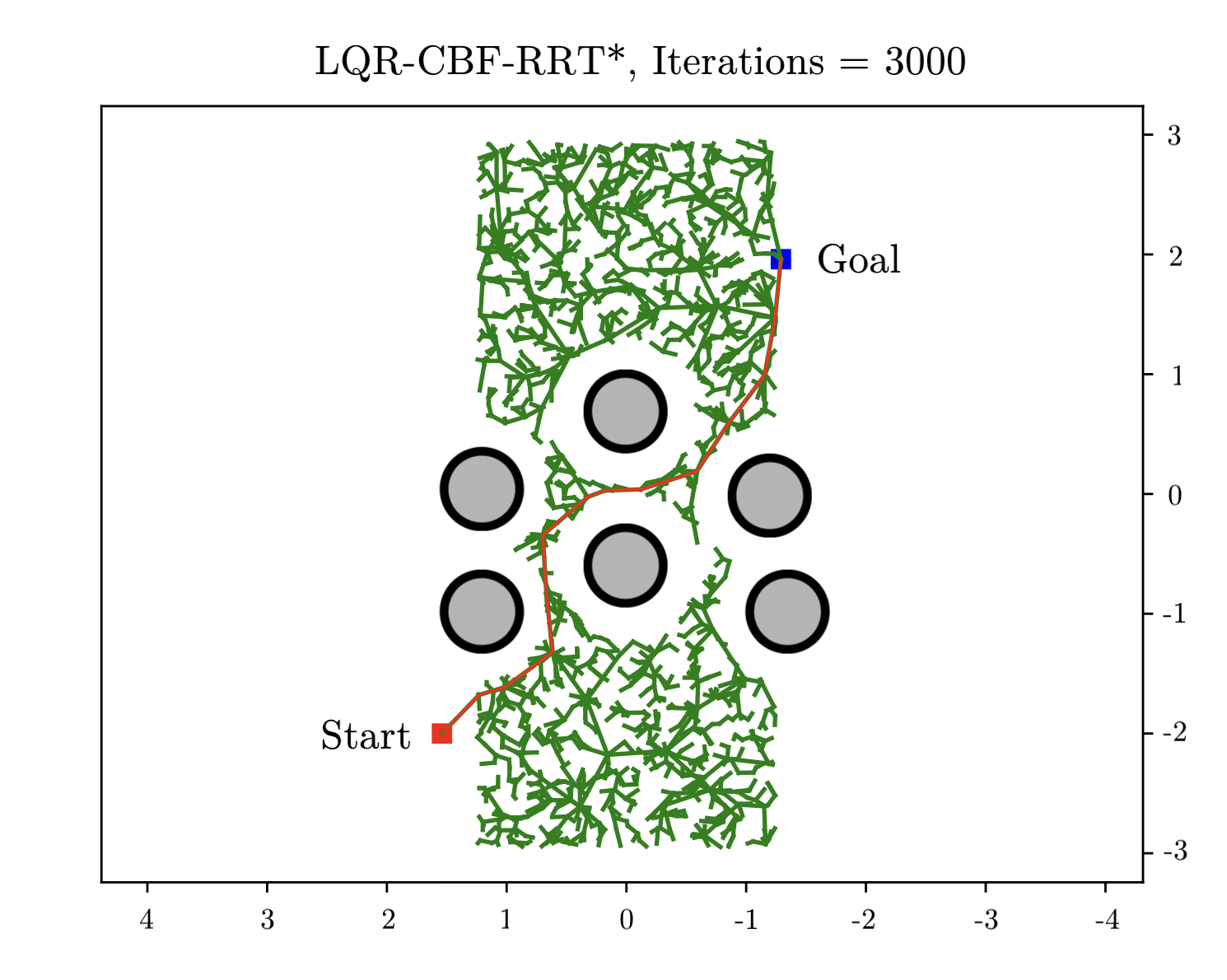}
    \caption{Before deploying the robot, we tested the planner in a simulation. In the visual representation, each green edge denotes a sampled trajectory, while the red trajectory illustrates the final solution obtained after 3,000 iterations.}
    \label{fig:planned_traj}
\end{figure}
 To evaluate the effectiveness of our planner in real-world scenario, we performed an experiment with an omnidirectional robot (Figure \ref{fig:robomaster}) to validate our method. The robot was tasked with successfully navigating through a cluttered environment featuring six obstacles. The obstacles are over-approximated with circular shapes with a radius of 0.25 meters for constructing the CBFs. To accurately track the robot's position, we employed an external telemetry system (OptiTrack), effectively simulating an outdoor GPS-enabled environment. The onboard Raspberry Pi computer is running ROS2 for communicating with the positioning system and processing control commands. A customized software stack handles the robot's low-level control called Frejya \cite{shankar2021freyja}. In this scenario, we first used our planner to generate the optimal trajectories offline (shown in Figure \ref{fig:planned_traj}). Then, the robot used its own online MPC controller to track the generated trajectory plan. The result (Figure \ref{fig:exp_hardware}) showed it is able efficiently navigate to the pre-defined goal without any collisions. The video is available at ~\footnote{\url{https://www.youtube.com/watch?v=yE2yhEQSSUY}}.
\begin{figure}
    \centering    \includegraphics[width=0.7\linewidth]{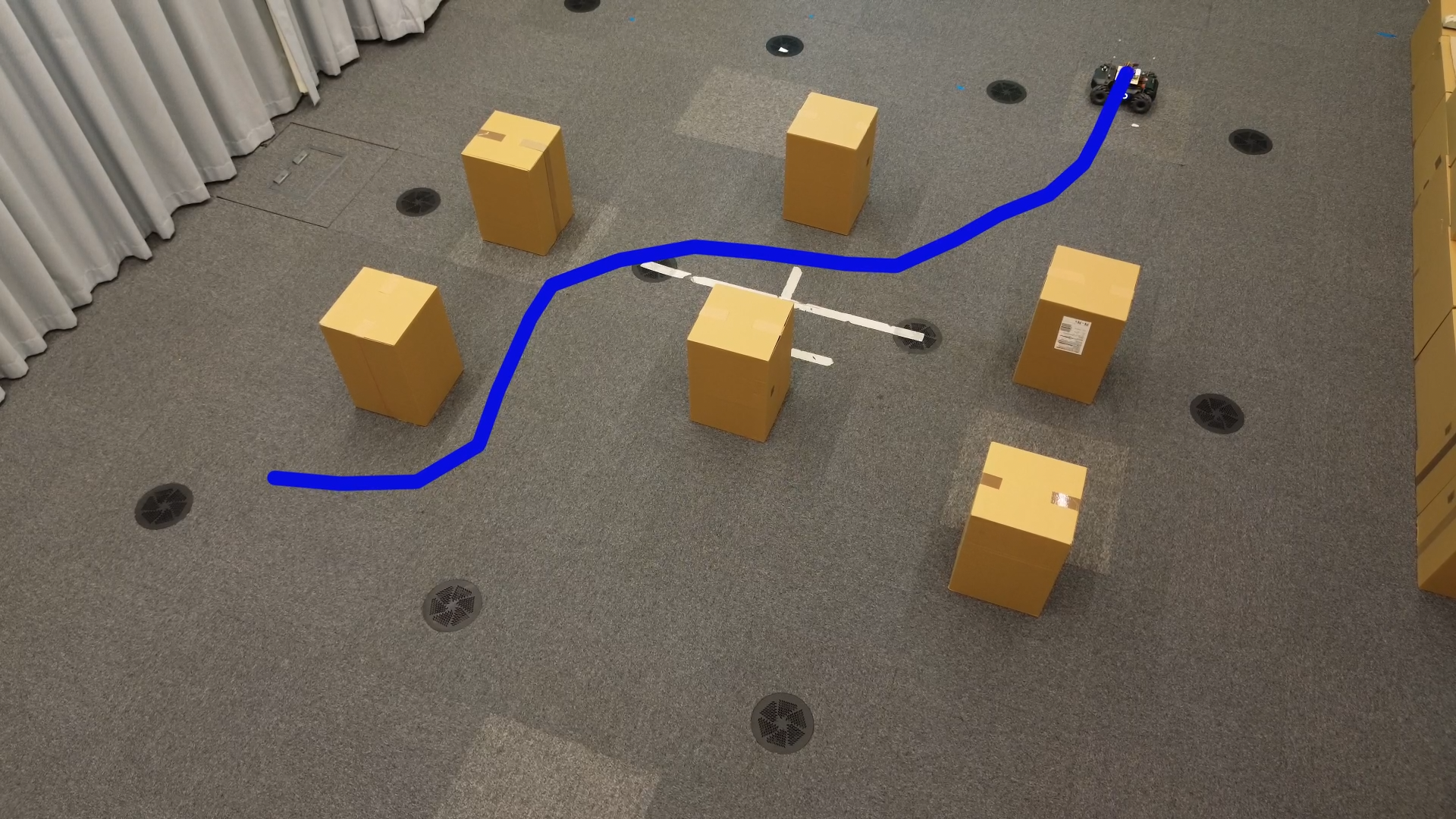}
    \caption{Showcasing an omni-directional ground robot that safely navigates around obstacles using our planner. The path followed by the robot is highlighted in blue.}
    \label{fig:exp_hardware}
\end{figure}

\section{Conclusion}
In this paper, we formulated an offline sampling-based motion planning problem to optimize the LQR cost function and ensure safety. We employ the CEM method, further boosting the efficiency of our algorithm during the sampling stage. Notably, our technique outperforms benchmark counterparts in comparative tests and demonstrates robust results in real-world experiments.

\bibliographystyle{IEEEtran}
\bibliography{reference}
\end{document}